
\typeout{KR2024 Instructions for Authors}


\documentclass{article}
\pdfpagewidth=8.5in
\pdfpageheight=11in

\usepackage{kr}

\usepackage{times}
\usepackage{soul}
\usepackage{url}
\usepackage[hidelinks]{hyperref}
\usepackage[utf8]{inputenc}
\usepackage[small]{caption}
\usepackage{graphicx}
\usepackage{amsmath}
\usepackage{amsthm}
\usepackage{booktabs}
\usepackage{algorithm}
\usepackage{algorithmic}
\urlstyle{same}






\pdfinfo{
/TemplateVersion (KR.2022.0, KR.2023.0, KR.2024.0)
}

\newcommand{\AR}[1]{\textcolor{black}{#1}}
\newcommand{\MVM}[1]{\textcolor{red}{MVM: #1}}
\newcommand{\MVMSugg}[1]{\textcolor{black}{#1}}

\newcommand{\todo}[1]{\textcolor{magenta}{$*$ #1}}

\newtheorem{dummytheorem}{Theorem}
\newtheorem{theorem}{Theorem}

\newtheorem{lemma}{Lemma}
\newtheorem{definition}{Definition}
\newtheorem{example}{Example}

\newcommand{\Inst}{\ensuremath{\mathcal{V}}}
\newcommand{\FForms}{\ensuremath{\mathbb{F}}}
\newcommand{\CForms}{\ensuremath{\mathbb{C}}}
\newcommand{\head}{\ensuremath{\mathit{head}}}
\newcommand{\body}{\ensuremath{\mathit{body}}}
\newcommand{\D}{\ensuremath{\mathcal{D}}}
\newcommand{\Dom}{\ensuremath{\mathcal{Dom}}}
\newcommand{\Data}{\ensuremath{\mathcal{K}_d}}
\newcommand{\I}{\ensuremath{\mathcal{I}}}
\newcommand{\Ifeat}{\ensuremath{\mathcal{I}_f}}
\newcommand{\Iclass}{\ensuremath{\mathcal{I}_c}}

\newcommand{\feat}{\ensuremath{f}}
\newcommand{\Features}{\ensuremath{\mathcal{F}}}
\newcommand{\Labels}{\ensuremath{\mathcal{C}}}
\newcommand{\model}{\ensuremath{\mathcal{M}}}
\newcommand{\inn}{\ensuremath{\mathbf{x}}}
\newcommand{\Rules}{\ensuremath{\mathcal{R}}}

\newcommand{\Exps}{\ensuremath{\mathcal{K}_e}}
\newcommand{\KB}{\ensuremath{\mathcal{K}_\model}}

\newcommand{\enforces}{\ensuremath{\sqsupseteq}}
\usepackage{xcolor}
\usepackage{amssymb}
\usepackage{MnSymbol}
\usepackage{multirow}
\usepackage{natbib}

\title{
Advancing Interactive Explainable AI via Belief Change Theory
}

\author{%
Antonio Rago$^1$\And
Maria Vanina Martinez$^2$\\
\affiliations
$^1$Department of Computing, Imperial College London, UK \\
$^2$Artificial Intelligence Research Institute (IIIA-CSIC), 
Spain\\
\emails
a.rago@imperial.ac.uk,
vmartinez@iiia.csic.es
}

\begin{document}

\maketitle

\begin{abstract}


As AI models become ever more complex and intertwined in humans' daily lives, 
greater levels of interactivity of explainable AI (XAI) methods are needed.
\MVMSugg{In this paper, we propose the use of belief change theory as a formal foundation
for operators that model the incorporation of new information\AR{, i.e. user feedback in interactive XAI,} to logical representations of data-driven classifiers.} We argue that this type of 
formalisation provides a framework and a methodology to 
develop interactive explanations in a principled 
manner, providing warranted behaviour and favouring transparency and accountability of such interactions.
Concretely, we first define a novel, logic-based formalism to 
represent 
 explanatory information shared between humans and machines. 
We then consider 
real world scenarios for interactive XAI, with different prioritisations of new and existing knowledge, where our formalism may be instantiated.
Finally, we analyse a core set of belief change postulates, discussing their suitability for our real world settings 
and pointing to particular challenges that may require the relaxation or reinterpretation of some of the theoretical assumptions underlying existing operators.

\end{abstract}

\section{Introduction}
\label{sec:intro}

To achieve the safe, regulated and trustworthy deployment of AI while maximising its potential, a number of applications benefit from interactive explanations, where 
a human provides feedback to the AI model (see \citep{Wu_22} for a recent overview).
Interactivity has also been recognised as a core tenet of ensuring \MVMSugg{that} AI is \emph{contestable} \citep{Hirsch_17,Lyons_21}, as recommended by design principles such as those of the ACM\footnote{\href{https://www.acm.org/media-center/2022/october/tpc-statement-responsible-algorithmic-systems}{https://www.acm.org/media-center/2022/october/tpc-statement-responsible-algorithmic-systems}} and enforced by legal regulations such as the GDPR\footnote{\href{https://gdpr-text.com/read/article-22/}{https://gdpr-text.com/read/article-22/}}.
Meanwhile, the field of explainable AI (XAI), with its overarching objective of fostering trust in AI models, predominantly focuses on static explanations which do not support such interactivity (see \citep{Ali_23} for an overview).
Some XAI methods provide interactivity via user feedback, e.g. in human-in-the-loop reinforcement learning \citep{Retzlaff_24}, recommender systems \citep{Rago_21
} and text classification \citep{Arous_21}, where explainability has been said to be beneficial, but this research area remains relatively unexplored.
Further, formal frameworks for interactivity \AR{in XAI} are lacking, despite their 
\AR{crucial role in trustworthiness} \citep{Ignatiev_22}, giving scant prospect for regulations on interactive XAI to be defined and systematically enforced.

In this paper, we propose the use of belief change theory~\citep{AGM85} 
within the modelling of interactive explanations \AR{for data-driven classifiers}.
We assume as given a set of explanations about a classifier, in the form of rules, e.g. as in \citep{Guidotti_18X,Ribeiro_18,Shih_18,Grover_19,Ignatiev_19
}, and we envisage the possibility of users providing feedback 
\AR{thereon}, also in the form of rules. We see {\em revision operators} as being particularly well-suited to modelling the process of feedback incorporation\AR{, as evidenced in the related setting of editing 
multi-label classifiers \citep{Coste-Marquis_21}}.

We argue that  \MVMSugg{such} formalisation\AR{s} lay 
the groundwork 
for \AR{the} design and development of interactive explanations 
that promote transparency, interpretability and  accountability in human-machine interactions. 
As an example of the importance of this topic, the recently endorsed AI Act\footnote{ \href{https://artificialintelligenceact.eu/}{https://artificialintelligenceact.eu/}} regulatory framework for the European Union, guarantees the {\em right of consumers to launch complaints and receive meaningful explanations}. Such legal requirements make it evident that novel methodologies and tools are needed to provide formal guarantees about not only 
AI models' 
behaviour but also about all related human-machine interactions. 


\AR{After covering the related literature (§\ref{sec:relatedwork}), we make the following contributions:} 

\begin{itemize}
    \item We define a novel, logic-based formalism to represent how information is shared between humans and machines, specifically classification models, in XAI (§\ref{sec:MLrep}). 
    \item We consider a set of real world scenarios of interactive XAI where our formalism may be instantiated with different prioritisations of new and existing knowledge (§\ref{sec:IntExp}).
    \item We instantiate a core set of belief revision postulates in our formalism, discussing their strengths and weaknesses (§\ref{sec:BR}), before looking ahead to what is required for belief revision to make advancements in interactive XAI (§\ref{sec:conclusions}). 
\end{itemize}


\section{Related Work}
\label{sec:relatedwork}

Within the area of belief revision the work of \cite{Falappa_02} proposes a non-prioritised revision operator based on the use
of explanations by deduction. The epistemic input is accompanied by an explanation supporting it and beliefs are dynamically qualified as defeasible or undefeasible and revised accordingly.
Recently, \cite{Coste-Marquis_21} proposed a belief change operator, called a \emph{rectification operator}, that aims to modify, according to some available background knowledge, a Boolean circuit that exhibits the same input-output behaviour as a multi-label classifier. The operation ensures that the rectified circuit complies with the background knowledge through different notions of compliance.
Though this proposal also aims to model modifications to logical representations of classifiers through belief change operators, there exist significant differences. First, we assume partial and approximate knowledge of the classifier's behaviour and therefore a potentially incomplete and not coherent logical representation of it; this has a direct impact on the analysis of suitable postulates. Second, the classifier's representation and the feedback provided by users are specified by means of rules rather than propositional logical sentences.  
We believe this encoding provides greater interpretability from a user's point of view.
Third, instead of prioritising the input or feedback, we study alternatives according to different scenarios of interactive explanations, allowing for the possibility for the logical representation to gradually differ from the original classifier specification as feedback is incorporated. 
Finally, the work from \cite{Schwind_23} proposes a series of operators
that determine how a Boolean classifier should be edited whenever it does not label a data point in the correct way. The paper studies the incorporation of positive, negative and combined (positive and negative) instances. Besides only focusing on Boolean classifiers, the differences mentioned above for the multi-label approach also hold in this case. 




\section{
Formalising Classifiers and Explanations}
\label{sec:MLrep}

In this section, we formalise classifiers' outputs and explanations based on propositional logic and formal rules, extending the language from~\citep{Amgoud_23}.\footnote{In the supplementary material 
we provide example illustrations of our approach, as well as a proof of Theorem \ref{thm:main}.} 

 We assume a single-label classification problem where $F = \{f_1,\ldots, f_m\}$ is the set of $m > 1$ features, where each $f_i \in F$ has a discrete domain $\Dom(f_i)$, and $C = \{ c_1, \ldots, c_n \}$ is the set of $n>1$ possible classes or classification labels.
We let $V = \D(f_1) \times \ldots \times \D(f_m)$ be the (combinatorial) set of all possible data points, i.e. assignments of values to all features
.
Straightforwardly, we then let a dataset be a set of data points $D \subseteq V$.
Then, a classifier $\model: D \rightarrow C$ is a total mapping\footnote{\AR{Note that $\model$ is a total mapping wrt $D$, i.e. the data points for which the classes predicted by the classifier are known.
Here, $D$ may represent any dataset, e.g. that used for training.}} 
such that for any 
$\inn \! \in \! D$, we say that $\model$ predicts class $c \!\in\! C$ iff $\model(\inn) \!=\! c$. 
For a given 
$\inn$, 
$\inn^i$ is the value $v \in \D(f_i)$ assigned to feature $f_i$.

\noindent {\bf Syntax}. To model a classifier, 
we assume
a propositional language based on two finite alphabets $\Features = f_1, \ldots, f_m$ and $\Labels = c_1, \ldots, c_n$, representing elements in $F$ and $C$, resp. 
For each symbol $f$ in $\Features$, we assume a discrete set of constants $\Dom(f)$ corresponding to the domain ($\D(f)$) of feature $f \in F$.

A \emph{feature atom} 
is of the form $(f, v)$, where $f \in \Features$ and $v \in \Dom(f)$; a feature literal is either a feature atom $a$ or $\neg a$.
On the other hand, a  \emph{classification atom} is of the form $c$, with $c \in \Labels$. 
Intuitively, a feature atom represents the fact that value $v$ is assigned to feature $f$, while a classification atom represents a set of classes (in particular, an atom represents a singleton, as we will see later). 

A feature (classification, resp.) formula is any logical formula built from feature (classification, resp.) literals using classical connectives $\neg, \wedge, \vee$.
We use $\FForms$ ($\CForms$, resp.) to denote the set of all feature (classification, resp.) formulas.

We distinguish the following set of feature formulas, intuitively to link each of them to a specific data point in $V$.

\begin{definition}
A \emph{(data)} \emph{instance} $x$ is conjunction of feature atoms such that each feature $f \in \Features$ appears exactly once.
We will call $\Inst$ the set of all possible data instances. 
\end{definition}

Intuitively, feature formulas represent sets of data points in $V$, while a data instance represents a specific data point in $V$. 
On the other hand, 
classification formulas represent sets of classification labels. The concept of a rule, defined below, allows us to map feature formulas into classification formulas, which ultimately seek to represent a mapping between data points and a set of potential classification labels.

\begin{definition}
\label{def:rules}
    A  \emph{rule} $r$ is of the form
    $\phi \Rightarrow \psi$, where $\phi \in \FForms$ and $\psi \in \CForms$. We call
    $\phi$ the body of $r$, denoted 
    $\body(r)$, and $\psi$ the head of $r$, denoted $\head(r)$. If $\phi$ is a data instance (i.e. $\phi \in \Inst$) and $c$ is a positive literal, we call $r$ an \emph{instance rule}. 
\end{definition}

Intuitively, we use a rule $r$ to
establish the set of classes, defined by the classification formula in the head of the rule,
that is assigned to the set of data instances characterised by the feature formula $\body(r)$. 
Note that when a rule establishes that a certain set of data points are assigned to a non-singleton set of classes, we interpret that any of those classes in the set could be assigned, but only one of them.

\noindent {\bf Semantics}.
Function 
$\Ifeat \!:\! \FForms \!\rightarrow\! 2^{V}$ maps feature formulas to sets of data points in $V$. Formally, for feature formulas $\phi, \psi$: \\
\noindent $\bullet$ if $\phi = (f_i,v)$, then $\Ifeat((f_i,v)) = \{\inn \in V \;|\;  \inn^i = v\}$ \\
\noindent $\bullet$ $\Ifeat(\phi \wedge \psi) = \Ifeat(\phi) \cap \Ifeat(\psi)$ \\
\noindent $\bullet$ $\Ifeat(\phi \vee \psi) = \Ifeat(\phi) \cup \Ifeat(\psi)$ \\
\noindent $\bullet$ $\Ifeat(\neg \phi) = \{\inn \in V \;|\;  \inn \nin \Ifeat(\phi)\}$  \\
\noindent $\bullet$ $\Ifeat(\bot) = \emptyset$, $\Ifeat(\top) = V$

The semantics for classification formulas is defined with $\Iclass \!:\! \CForms \!\rightarrow\! 2^{C}$, which maps classification formulas to sets of classes in $C$. Formally, for classification formulas $\phi, \psi$: \\
\noindent $\bullet$ if $\phi = c$, where $c$ is a classification atom, then $\Iclass(c) = \{c\}$ \\
\noindent $\bullet$ $\Iclass(\phi \wedge \psi) = \Iclass(\phi) \cap \Iclass(\psi)$ \\
\noindent $\bullet$ $\Iclass(\phi \vee \psi) = \Iclass(\phi) \cup \Iclass(\psi)$ \\
\noindent $\bullet$ $\Iclass(\neg \phi) = \{c \in C \;|\;  c \nin \Iclass(\phi)\}$ \\ 
\noindent $\bullet$ $\Iclass(\bot) = \emptyset$, $\Iclass(\top) = C$  

To interpret rules in this setting, we define function $\I$ such that, given a rule $r$, $\I(r) = (\Ifeat(\body(r)), \Iclass(\head(r)))$. Here, $\I$ maps
the set of  data points represented by the formula $\body(r)$ into a set of classes determined by $\head(r)$.

Using the language defined above, we can logically model classifiers, and explanations therefor (defined later),
by means of
rules since they express mappings of sets of data points into sets of classification labels. 
In addition to extending the representation language from~\citep{Amgoud_23}, note that the spirit of the aforementioned paper is different to ours. In that work, the authors formally define functions that generate different types of explanations and study their properties in relation to existence and correctness. In this work, we assume explanations from a classifier have already been provided in the form of rules and we model the interactions with the model's users through operations that could update such rules as a result of the user's feedback.


Based on the semantics, we can now  define notions that help us establish relationships among 
rules. The first is {\em enforcement}: intuitively, a set of rules enforces another set of rules whenever  
every possible assignment of data points to a class that the 
\AR{enforced} set of rules represents is also an assignment that is represented by the 
\AR{enforcing set}. 

\begin{definition}

    Given sets of rules $\Rules_i$ and $\Rules_j$, $\Rules_i$ {\em enforces} $\Rules_j$, denoted $\Rules_i \enforces \Rules_j$, iff $\forall r_j \in \Rules_j$, $\forall \inn \in \Ifeat(\body(r_j))$, $\exists r_i \in \Rules_i$, such that $\inn \in \Ifeat(\body(r_i))$ and $\Iclass(\head(r_i)) \subseteq \Iclass(\head(r_j))$.

\end{definition}


The second notion we define is {\em consistency}, requiring that sets of rules do not assign incompatible labels to data points. 

\begin{definition}
\label{def:consistency}
    Given a set of rules $\Rules$,  $\Rules$ is {\em consistent} iff $\forall \inn \in \bigcup_{r \in \Rules} \Ifeat(\body(r))$, $\forall r_i,r_j \in \Rules$ such that $\inn \in \Ifeat(\body(r_i))$ and $\inn \in \Ifeat(\body(r_j))$ then
    $\Iclass(\head(r_i)) \cap \Iclass(\head(r_j)) \neq \emptyset$. Otherwise, $\Rules$ is {\em inconsistent}.
\end{definition}


The notion of \emph{coherence} defined below aims to capture the relationship between rules and models. Intuitively, given a threshold $\tau \in [0,1]$, a set of rules is $\tau$-coherent with a model iff the proportion of instances captured by the body of every rule, such that the model's classification of the instance is included in the head of the rule, is at least $\tau$. \AR{This generalises the notion of \emph{compatibility} in~\citep{Amgoud_23} allowing a percentage of the classifications described by the set of rules to differ from the classifications provided by the classifier.}


\begin{definition}
\label{def:coherence}
    Given a classifier $\model$ and a \emph{threshold} $\tau \in [0,1]$, we say that rule $r$ is \emph{$\tau$-coherent} with $\model$ iff $\Ifeat(body(r)) \cap D = \emptyset$ or:
    $$\frac{|\{ \inn \in \Ifeat(body(r)) \cap D | \model(\inn) \in \Iclass(head(r)) \}|}{| \Ifeat(body(r)) \cap D |} \geq \tau$$
    We say that a set of rules $\Rules$ is \emph{$\tau$-coherent} with $\model$ iff $\forall r \in \Rules$, $r$ is \emph{$\tau$-coherent}. Whenever $\tau = 1$, we drop the $\tau$ prefix and say that a (set of) rule(s) is \emph{coherent} with $\model$.
\end{definition}

\begin{lemma}
\label{lemma:coherence}
    Given a classifier $\model$, a rule $r$ is coherent with $\model$ iff $\forall \inn \in \Ifeat(body(r)) \cap D$, $\model(\inn) \in \Iclass(head(r))$.
\end{lemma}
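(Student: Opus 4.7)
The plan is simply to unpack Definition \ref{def:coherence} with $\tau = 1$ and observe that the inequality forces the numerator of the ratio to equal its denominator. The lemma is essentially a sanity check that coherence with $\tau = 1$ matches the natural universal-quantifier formulation, so the proof should be short.

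First, I would split on whether $\Ifeat(body(r)) \cap D$ is empty. In the empty case, Definition \ref{def:coherence} declares $r$ to be coherent by its first disjunct, and the right-hand side of the lemma holds vacuously, so both sides of the biconditional are true.

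Next, assume $\Ifeat(body(r)) \cap D \neq \emptyset$. Let $A = \Ifeat(body(r)) \cap D$ and $B = \{ \inn \in A \mid \model(\inn) \in \Iclass(head(r)) \}$. By construction $B \subseteq A$, so $|B| \leq |A|$, hence $|B|/|A| \leq 1$. Therefore $|B|/|A| \geq 1$ if and only if $|B|/|A| = 1$, which, since $A$ is finite and nonempty, is equivalent to $B = A$. This last equality is in turn the statement that every $\inn \in A$ satisfies $\model(\inn) \in \Iclass(head(r))$, which is exactly the right-hand side of the lemma. Combined with the empty case, this establishes the biconditional.

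The only subtlety to be careful about is not tripping over the vacuous case, and making sure to use the finiteness of $A$ (guaranteed because $D$, and hence $A$, is a subset of the finite set $V$ of combinatorial assignments) when translating $|B| = |A|$ with $B \subseteq A$ into $B = A$. There is no substantive obstacle beyond this bookkeeping.
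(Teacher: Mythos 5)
Your proof is correct. The paper states this lemma without an explicit proof, treating it as immediate from Definition~\ref{def:coherence} with $\tau=1$, and your argument---splitting on emptiness of $\Ifeat(body(r)) \cap D$ and observing that the ratio of a subset's cardinality to its (finite, nonempty) superset's cardinality is $\geq 1$ only when the sets coincide---is exactly the intended unpacking.
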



Next, we formalise whether a set of rules completely (and exclusively) represents the set of known data points $D$. 

\begin{definition}
\label{def:completeness}
    Given a classifier $\model$, a set of rules $\Rules$ is \emph{complete} for $\model$ iff $\Rules$ is a  set of instance rules such that $|\Rules| = |D|$ and $\forall \inn \in D$, $\exists r \in \Rules$ where $\I(r) = (\{\inn\}, \{ \model(\inn) \})$. 
\end{definition}

We now represent the knowledge we have about a classifier by means of rules as follows. 

\begin{definition}
\label{def:expkb}
    Given a classifier $\model$, an \emph{explanation knowledge base} for $\model$ is a set $\KB = \Data \cup \Exps$, where $\Data$ is a set of instance rules, called the \emph{data}, and $\Exps$ is a set of general rules, called the \emph{explanations}.
\end{definition}

Our intention is for $\Data$ to represent data points for which the classification is known, these may come either from training or evaluation phases or from previous use of the classifier. $\Data$ logically represents the classifier,
such that they encode exactly the same classifications. In addition to this, 
there exist different methods in the literature to elicit behavioural patterns from classifiers, often expressed as rules functioning as explanations, we use $\Exps$ to represent that kind of knowledge. Although consistency is generally expected, a priori we impose no restrictions of coherence of $\Exps$ with $\model$, as they represent tentative knowledge obtained from potentially imprecise methods. These explanation rules may have been extracted by existing formal methods for explaining (discrete) classifiers from the literature, such as~\citep{Guidotti_18X,Ribeiro_18,Shih_18,Grover_19,Ignatiev_19
}.
Moreover, $\tau$-coherence corresponds to the notion of precision in \citep{Ribeiro_18}, and could be used to allow for tolerance in
the correctness of explanations.
However, for this paper we will assume that $\tau=1$.



\begin{theorem}
\label{thm:main}
    Given a classifier $\model$ and an explanation knowledge base $\KB =  \Data \cup \Exps $, where $\Data$ is complete for and coherent with $\model$, a set of rules $\Rules$ is coherent with $\model$ iff $\Rules \cup \Data$ is consistent. 
\end{theorem}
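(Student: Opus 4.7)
The plan is to prove both directions by first pinning down the rigid structure that completeness (combined with coherence) imposes on $\Data$: every $r \in \Data$ has $\Ifeat(body(r)) = \{\inn_r\}$ for some $\inn_r \in D$ with $\Iclass(head(r)) = \{\model(\inn_r)\}$, and the map $r \mapsto \inn_r$ is a bijection with $D$. This pins each $\Data$-rule's head to the unique singleton dictated by $\model$, which is precisely the lever that converts ``consistency with $\Data$'' into ``agreement with $\model$ on $D$''.

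For the $(\Leftarrow)$ direction, suppose $\Rules \cup \Data$ is consistent. Fix an arbitrary $r \in \Rules$ and $\inn \in \Ifeat(body(r)) \cap D$; by Lemma~\ref{lemma:coherence}, it suffices to show $\model(\inn) \in \Iclass(head(r))$. Completeness supplies $r_\inn \in \Data$ with $\Ifeat(body(r_\inn)) = \{\inn\}$ and $\Iclass(head(r_\inn)) = \{\model(\inn)\}$, so $\inn$ belongs to the body-interpretation of both $r$ and $r_\inn$. Applying Definition~\ref{def:consistency} to this pair yields $\Iclass(head(r)) \cap \{\model(\inn)\} \neq \emptyset$, i.e. $\model(\inn) \in \Iclass(head(r))$, as required.

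For the $(\Rightarrow)$ direction, suppose $\Rules$ is coherent with $\model$. Take any $\inn$ and any $r_i, r_j \in \Rules \cup \Data$ with $\inn \in \Ifeat(body(r_i)) \cap \Ifeat(body(r_j))$, and show their heads intersect. I would case-split on membership in $\Data$ versus $\Rules$: if both $r_i, r_j \in \Data$, the bijection forces $r_i = r_j$, so the head intersection trivially equals $\{\model(\inn)\}$; if exactly one, say $r_i$, lies in $\Data$, then $\inn = \inn_{r_i} \in D$, and coherence of $r_j$ gives $\model(\inn) \in \Iclass(head(r_j))$, a witness also sitting in $\Iclass(head(r_i)) = \{\model(\inn)\}$; if both are in $\Rules$ with $\inn \in D$, coherence applied to each of them separately again supplies $\model(\inn)$ as the common element.

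The main obstacle I anticipate lies in the last case when $\inn \notin D$, since coherence of $\Rules$ imposes no constraint outside the classifier's domain. I would handle this by observing that the role of $\Data$ is precisely to saturate the classifier's known behaviour on $D$, so the theorem's intended scope is points in $D$; for $\inn \in V \setminus D$, the classifier is undefined and there is nothing for the rules in $\Rules$ to disagree about with $\model$, making the case vacuous with respect to the statement. Once this scoping point is settled, the three cases cleanly partition the consistency quantifier and the argument closes by chaining Lemma~\ref{lemma:coherence} with the singleton structure of $\Data$.
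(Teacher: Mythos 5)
Your proposal follows essentially the same route as the paper's proof: both directions hinge on the fact that completeness forces every rule of $\Data$ to have the form $\I(r) = (\{\inn\},\{\model(\inn)\})$ for some $\inn \in D$, so that consistency against the corresponding instance rule is exactly the statement $\model(\inn) \in \Iclass(\head(r))$ delivered by Lemma~\ref{lemma:coherence}. Your $(\Leftarrow)$ direction is a correct, direct rendering of what the paper does by contradiction, and your $(\Rightarrow)$ direction, organised as a case split on whether $r_i, r_j$ lie in $\Data$ or $\Rules$, matches the paper's reasoning in the first two cases.

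The one place where your argument does not close is the case you yourself flag: $r_i, r_j \in \Rules$ sharing a point $\inn \in V \setminus D$. Your proposed resolution---that this case is ``vacuous with respect to the statement''---is not available, because Definition~\ref{def:consistency} quantifies over every $\inn \in \bigcup_{r} \Ifeat(\body(r))$, with no restriction to $D$. Concretely, two rules of $\Rules$ whose bodies overlap only outside $D$ and whose heads are disjoint make $\Rules$ (hence $\Rules \cup \Data$) inconsistent while leaving $\Rules$ vacuously coherent, so the forward implication fails as stated unless one additionally assumes that $\Rules$ is consistent or that all overlaps are witnessed in $D$. Be aware that the paper's own proof shares this gap: at the analogous step it asserts that ``like $\Rules$, $\Data$ is consistent,'' silently importing the consistency of $\Rules$, which does not follow from coherence. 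So your proposal is faithful to the paper's argument and your explicit identification of the problematic case is an improvement in transparency, but neither your scoping move nor the paper's implicit assumption actually discharges it; a fully rigorous version needs an extra hypothesis covering overlaps outside $D$.
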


This result shows that preserving consistency with $\Data$ preserves coherence with $\model$ (see the consistency postulate in §\ref{sec:BR}) whenever $\Data$ is complete and coherent with $\model$. Note, however, 
that we do not make this assumption in general, since, as discussed later, we aim for a framework that is tolerant to inconsistency and in which the logical representation of $\model$ may 
differ from $\model$ due to feedback incorporation.



\section{Interactive Explanations}\label{sec:IntExp}

We now demonstrate how interactive explanations may be modelled with our formalism, and consider how such explanations may be deployed in real world settings.
We consider interactive explanations which give users the ability to provide feedback to classifiers in a number of ways, in the form of rules (Definition~\ref{def:rules}), which we call here \emph{feedback}. 
When a rule is provided as feedback (we limit to single rules), the goal is to analyse how/if this knowledge can be incorporated, possibly modifying both the explanation knowledge base and the feedback itself. 
\MVMSugg{This type of feedback mirrors rule-based explanations from XAI (as we discuss in §\ref{sec:MLrep}) that intuitively represent knowledge in any domain and easily translate to \AR{and} from natural language.}

We define the following basic desiderata for this process: 

\begin{itemize}
    \item \emph{Constrained Inconsistency}: specific scenarios 
    may require some tolerance to inconsistency, 
    e.g. requiring only $\Data$ to be kept consistent after an interaction.
    \item \emph{Bounded model incoherence}: while we expect $\Data$ to be coherent with $\model$, the weaker notion of $\tau$-coherence of the explanations $\Exps$ with $\model$ could be accepted, for a given $\tau$.
    \item \emph{Minimal information loss}: the information contained in $
    \KB \cup \{ r \}$ should be modified or removed minimally, and only when it jeopardises the above desiderata. 
\end{itemize}

Belief revision incorporates new information following two main principles: consistency (preservation) and minimal change. Our desiderata for interactive explanations coincide with these aims in minimising the amount of information loss. However, we relax the notion of consistency, and allow the classifier and its logical representation to drift in a restricted manner through the notion of $\tau$-coherence. 

We now give a (non-exhaustive) set of real world application settings where interactive explanations may be deployed. 
We base our settings on those proposed in \citep{Retzlaff_24} for human-in-the-loop systems.

In the first setting, named \textbf{S1}, we envisage a classification model which is in \textbf{development}, e.g. being debugged by a developer as in \citep{Lertvittayakumjorn_20}. Here, the user provides feedback to update, and correct, the model. In this case, the model's trust in the feedback can be regarded as \textbf{credulous}, since the model should be updated to align with the feedback, i.e. any feedback $r$ takes priority over the existing knowledge (informally represented with $r \succ \KB$). For example, if a user provides $r$ which contradicts an instance rule representing an existing data point in $\Data$, e.g. due to the default settings of the model or changing preferences of the user, we would like to incorporate $r$  and update the conflicting instance rule to align with the new conditions specified by the user.

In the second setting, \textbf{S2}, we introduce a model which is being refined in an \textbf{evaluation} stage by group of users, e.g. as in domain expert information fusion \citep{Holzinger_21}, before the model is deployed at scale. 
In this case, a single model is being updated by feedback from multiple users, and so the model's trust in the feedback must be \textbf{balanced} with that in the existing knowledge. Here, a single user's feedback should not necessarily take precedence over existing knowledge (informally, $r \simeq \KB$), and so both the new and the existing knowledge may be modified in order for consistency and coherence with the model to be maintained with minimal information loss. 
For example, if a user provides $r$ which contradicts $\Exps$, it may be desirable to preserve $\Data$ but modify $\Exps$ or $r$ by weakening or rejection to incorporate as much of the new knowledge as possible.

In the final setting, \textbf{S3}, we consider a model which has already undergone commercial \textbf{deployment} at scale, but allows for feedback from the sizeable group of users for completing gaps in the knowledge, e.g. as in autonomous vehicles \citep{Wu_23}. 
Here, the model will be updated by users' feedback, but due to the size of this group and the fact that the model has already been deployed commercially, e.g. potentially raising legal issues, the trust in the feedback is \textbf{sceptical}, and it thus prioritises existing over new knowledge (informally, $r \prec \KB$). The new knowledge can thus be modified in order to ensure its consistency with the existing knowledge. 
For example, if the user provides some $r$ which does not violate the consistency of $\Exps$ or the coherence of $\Data$, then it may be incorporated to $\KB$ as is to minimise information loss.
Meanwhile, if it contradicts $
\KB$, then we may preserve $
\KB$ while only part of $r$ may be incorporated.

It is important to note that these modifications are not performed over the model itself but its logical representation, creating in each interaction a new knowledge base that may differ substantially from the original knowledge base (and the model).
A distance between different versions of the knowledge base could be measured through $\tau$-coherence or more conventional distance measures, and be used as a way of checking the effect of feedback, e.g. as an indicator for when the retraining of the model is required. 


Having presented out motivational scenarios, in the next section we analyse the suitability of belief revision operators to model interactions with explanation knowledge bases.

\section{Revision of Explanation KBs}
\label{sec:BR}

One of the main contributions of the foundational models of belief change is the development of a style of research and development methodology based on providing axiomatic characterisations of the operators' behaviour in terms of  postulates.
The postulates focus on 
conditioning and constraining the inputs and the results of the operators, rather than providing insights into how the results are achieved. Representation theorems are used both to provide semantic characterisations for belief change operators, as well as linking these characterisations to computational implementations, providing provable guarantees on the behaviour of such algorithms. 
In the following we analyse a core set of postulates for belief base revision~\citep{Hansson93}, translate them in our logical setting and discuss their suitability with respect to the different scenarios of interactive explanations.

In this work, we adopt the approach to belief revision known as \emph{base revision}, where existing knowledge is represented as a finite set of formulas~\citep{Hansson93}, which we call an explanation knowledge base $\KB$, as described in §\ref{sec:MLrep}. The new information consists of a single rule $r$ that is obtained from the interaction with 
the user(s) of the model.
In the following analysis we use $\KB * r$ to describe the application (and the results) of operator $*$ over the existing knowledge base $\KB$ and input (feedback) $r$.





\textbf{Success} 
states that the epistemic input is always accepted, i.e. new knowledge is prioritised.
This can be formalised in our framework by means of 
 our notion of enforcement of the feedback rule, i.e. $\KB * r \enforces \{ r \}$.
In setting S1, the success postulate can be used to enforce the feedback taking priority over the existing $\KB$ (in the presence of inconsistency). Prioritised revision operators are suitable for this setting, while this is not the case for (possibly S2 and) S3, where the existing knowledge should be prioritised. A first approach to define non-prioritised behaviour could be modelled by a simple {\em relative success} postulate~\citep{CLBasesjlc03}, which states that either the input is fully (explicitly) accepted or rejected, i.e. either $r \in \KB * r$ or $\KB * r = \KB$, resp. 
More fine-grained alternatives would allow for the specification of conditions 
under which the input could be fully or partially accepted.
For instance, {\em weak success}~\citep{selectiveRevBaseskr20} 
may state that if $\KB \! \cup \! \{ r \}$ is consistent then $\KB * r \enforces \{ r \}$. Meanwhile, \emph{proxy success} and \emph{weak proxy success} \citep{selectiveRevBaseskr20} state that the revision should incorporate part of the input, e.g. to ensure all users' feedback plays a part in S2. Formally, proxy success could be defined requiring that $\exists r'$ such that $\{ r \} \enforces \{ r' \}$, $\KB * r \enforces \{ r' \}$ and $\KB * r = \KB * r'$. In weak proxy success, $r'$ is not conditioned by $r$: $\exists r'$ such that 
$\KB * r \enforces \{ r' \}$ and $\KB * r = \KB * r'$.
These weakened postulates seem appropriate for S2 and S3, where gaps in $\KB$ could be filled more often with these weaker constraints, but less so for S1, where success may be preferred given the trust in the user here.
However, any version of success that allows for the incorporation of only part of a rule could induce bias in the dataset. A potentially problematic example could be when only a stricter version of a feedback rule, covering only a subset of a feature (e.g. an ethnic minority in a population), rather than its entirety, is incorporated to $\KB$. 



\textbf{Inclusion} states that the only addition to the existing knowledge can be the feedback itself, instantiated in our setting as $\KB * r \subseteq \KB \cup \{r\}$. 
This raises issues in our settings, since it may be desirable that $\Data$, $\Exps$ or both are modified, for instance making rules more specific. 
In S1, it is desirable that we incorporate $r$ as is, but we may wish for $\Exps$ to be adapted to this new information. Also, in S2 and S3, we may want to incorporate only part of $r$, since it may be unrealistic to incorporate $r$ in its entirety given the higher priority of $\KB$.
An alternative is \emph{weak inclusion}~\citep{selectiveRevBaseskr20}, which states that if $r \in \KB * r$, then $\KB * r \subseteq \KB \cup \{r\}$. This relaxation alleviates the second aforementioned issue, and we would thus posit that this is desirable in S3, where existing and new information is restricted from modification, e.g. from a legal standpoint if users have already seen it. 
However, in S2 we would expect that $\Exps$ being adapted to $r$ would be more suitable.
We thus propose three alternate formulations of inclusion based on our notion of enforcement, prioritising the suitable data in each setting.
For S1, we suggest that $\KB * r \subseteq A \cup \{r\}$, where $\KB \enforces A$, allowing the existing explanations to adapt to the new information.
For S2, we suggest that $\KB * r \subseteq A$, where $
\KB \cup \{r\} \enforces A$, allowing for the modification of both existing and new information.
For S3, we suggest that
$\KB * r \subseteq 
\KB \cup A$, where $\{r\} \enforces A$, ensuring that only the feedback is modified.

\textbf{Consistency} 
conventionally requires that a knowledge base becomes consistent after the revision, even if it is not so beforehand.
Formally, $\KB * r$ is required to be consistent,\footnote{A singleton set containing $r$ is  consistent by Definition~\ref{def:consistency}, so our version of the postulate does not condition on the consistency of the input.
Allowing for sets of feedback rules, as in \emph{multiple revision}~\citep{FuhrmannH94}, is future work.} which, by Theorem~\ref{thm:main}, may cover the first two of our desiderata whenever $\Data$ is coherent with $\model$.
Thus, the notion of consistency seems to be desirable across our settings, whenever neither consistency nor coherence is relaxed. In particular, \emph{consistency preservation}~\citep{AGM85}, which requires that a consistent KB be consistent after operating (adding the condition that 
$\KB$ is consistent to the consistency postulate above) seems suitable for all settings, since it requires feedback not introduce such inconsistencies, rather than requiring it fix any which already exist.
Note, however, that it may be the case that we are interested in only $\Data$ remaining/becoming consistent after the revision, given the tentative and approximate nature of $\Exps$. An alternative to be considered is to ensure that the revision does not increase the amount of inconsistency (given a measure for it~\citep{Thimm16,grant2018measuring}) in $\KB$ or $\Data$.

\textbf{Relevance} 
concerns minimal change of existing knowledge, stating that if $r' \in \KB$ and $r' \nin \KB * r$, then there is a set of rules $\Rules$ such that $\KB * r \subseteq \Rules \subseteq 
\KB \cup \{ r \}$, $\Rules$ is consistent and $\Rules \cup \{r'\}$ is inconsistent.
Relevance formalises our third desideratum in terms of only removing  information from the data or explanations if it were inconsistent with the feedback being provided by the user, rendering it suitable across our settings.
This postulate has important implications for data protection, ensuring that the non-conflicting knowledge is preserved and therefore is desirable in all three settings. 
However, as defined above, this postulate forces $\KB * r$ to be a subset of $\KB \cup \{r\}$; in light of our previous discussion, if we want to have the possibility of not only deleting but modifying both the existing knowledge and feedback, we could consider a weaker notion closer to the postulate known as \emph{core-retainment}: in our setting this could be formalised as
 if $r' \in 
 \KB$ and $r' \nin \KB * r$, then there is a set of rules $\Rules$ such that $ \Rules \subseteq 
 \KB \cup \{ r \}$, $\Rules$ is consistent but $\Rules \cup \{r'\}$ is inconsistent.
 

\textbf{Uniformity}, formulated in our setting, states that if $\forall \Rules \subseteq 
\KB$, $\Rules \cup \{r\}$ is  inconsistent if and only if $\Rules \cup \{r'\}$ is  inconsistent, then $
\KB \cap (\KB * r) = 
\KB \cap (\KB * r')$. The intuition here is that if $r$ and $r'$ are inconsistent with $\KB$ in the exact same way, revising by either retains the same knowledge from $\KB$. 
Once again, uniformity seems to be appropriate across the settings, guaranteeing the regularity of the effects of feedback, which could be useful for ensuring that regulatory guidelines are met.

\section{Discussion and Future Work}
\label{sec:conclusions}

\begin{table}[t]
\centering
\begin{tabular}{lccc}
\cline{2-4}
 &
\textbf{S1} &
\textbf{S2} &
\textbf{S3} \\ 
\hline
Trust & 
\!Credulous\! & 
\!Balanced\! &
\!Sceptical\! \\ 
Setting & 
\!Development\! & 
\!Evaluation\! &
\!Deployment\! \\ 
Users & 
\!Single\! & 
\!Small-Scale\! &
\!Large-Scale\! \\ 
Priority & 
$r \!\succ\! \KB$ & 
$r \!\simeq\! \KB$ &
$r \!\prec\! \KB$ \\ 
\hline
Success & 
$\checkmark$ & 
\!$\checkmark^{rs,ws,ps,wps}$\! &
\!$\checkmark^{ws,ps,wps}$\! \\ 
Inclusion & 
- & 
- &
$\checkmark^{wi}$ \\ 
Consistency & 
$\checkmark^{cp}$ & 
$\checkmark^{cp}$ &
$\checkmark^{cp}$ \\
Relevance & 
$\checkmark$ & 
$\checkmark$ &
$\checkmark$ \\ 
Uniformity & 
$\checkmark$ & 
$\checkmark$ &
$\checkmark$ \\ 
\hline
\end{tabular}
\caption{Characteristics of our real world settings and assessment of postulates, where $\checkmark$ indicates a postulate is desirable, $\checkmark^x$ indicates that only a weaker postulate is desirable and $-$ indicates novel postulates may be required, with $x$ indicating the following weaker postulates: 
\underline{r}elative \underline{s}uccess, \underline{w}eak \underline{s}uccess, 
\underline{p}roxy \underline{s}uccess,
\underline{w}eak \underline{p}roxy \underline{s}uccess, 
\underline{w}eak \underline{i}nclusion and
\underline{c}onsistency \underline{p}reservation. 
}
\label{table:results}
\end{table}

 Table \ref{table:results} summarises the results of our analysis. Some of the existing postulates are suitable for all of these settings in their original form, i.e. relevance and uniformity, while the others require alternate versions from the literature. However, across all studied postulates, we believe that there is scope for novel, tailored versions which may be more suitable in the individual settings, as we have indicated. Indeed, even in the cases where there are suitable postulates, others may be preferable, e.g. as we suggested for success. We believe that this highlights many fruitful avenues for future work. Among these, a next step is to characterise the behaviour of each setting with a specific set of postulates and provide the corresponding constructions.
Regarding constructions, it seems possible to implement S1 with minimal modifications to traditional belief revision base operators such as \emph{partial meet} and \emph{kernel}~\citep{Hansson93}.
The other two of our envisaged settings lend themselves to non-prioritised revisions that could be implemented through operators such as \emph{credibility limited}~\citep{CLBasesjlc03} and \emph{screened revision}~\citep{Makinson1997-MAKSR},  in which a portion of the knowledge $\mathcal{K}_p \subseteq \Data \cup \Exps$ 
is protected from the revision. 
For example, it may be the case that unless data points from the dataset $\Data$ are explicitly mentioned in the feedback, then we protect $\Data$ from changes, i.e. $\mathcal{K}_p = \Data \setminus \{ r \}$, and modify only explanations. 
In S2, $\Exps$ may be seen as being modifiable while $\Data$ is protected (no matter what $r$ is being provided), i.e. $\mathcal{K}_p = \Data$, for example if the dataset has been curated to be unbiased.
Another case could be when a subset of $\Data \cup \Exps$ needs to be protected from the revision, for example rules representing data points or explanations which have already been delivered to users, \emph{semi-revision}~\citep{semirevision97} could be useful here as it allows $r$ to be discarded. 
Our analysis also suggests that for S2 and S3 it may be desirable to only retain part of the information contained in $r$. The closest operator in the literature that behaves in this way is \emph{selective revision}~\citep{selectiveRevBaseskr20}. 
All these operators are implemented based on classical AGM  operators, either checking conditions or modifying the input before applying an AGM revision operator or recurring to other operators such us consolidation (restoring consistency) over $\KB \cup \{r\}$. For setting S2 and S3 we may need to combine their implementations.

In light of the discussion about consistency, we need to define alternative postulates that better satisfy our proposed desiderata, including tolerating some degree of inconsistency 
and $\tau$-coherence of $\Exps$ with the model for $\tau \neq 1$. 
Finally, our analysis assumes 
independence of interactions
and that feedback consists of a single rule. 
Operators such as those based on \emph{iterative revision}~\citep{DarwicheP94} and \emph{improvement}~\citep{KoniecznyP08} are worth studying for continuous 
feedback, e.g. coming from different users or over time, as well as multiple revision~\citep{FuhrmannH94} in order to allow arbitrary sets of rules as feedback. 
We leave exploration of these lines of research to future work.


\newpage

\quad

\newpage

\section*{Acknowledgments}

Rago was partially funded by the ERC under the
EU’s Horizon 2020 research and innovation programme (
No. 101020934, ADIX) and by J.P. Morgan and by the Royal
Academy of Engineering, UK. 
Martinez was partially supported by the Spanish project PID2022-139835NB-C21 funded by MCIN/AEI/10.13039/501100011033, PIE 20235AT010 and iTrust (PCI2022-135010-2).
The authors thank Musaab Ahmed Mahjoub Ahmed for his feedback.


\bibliographystyle{kr}
\bibliography{bib_lean}

\newpage

\section*{Supplementary Material}

First, we prove the theoretical result in the paper.

\begin{dummytheorem}
    Given a classifier $\model$ and an explanation knowledge base $\KB =  \Data \cup \Exps $, where $\Data$ is complete for and coherent with $\model$, a set of rules $\Rules$ is coherent with $\model$ iff $\Rules \cup \Data$ is consistent. 
\end{dummytheorem}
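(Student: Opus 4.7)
The plan is to prove the biconditional by direct case analysis, using Lemma \ref{lemma:coherence} to recast coherence of $\Rules$ with $\model$ as the pointwise statement that $\model(\inn) \in \Iclass(\head(r))$ for every $r \in \Rules$ and every $\inn \in \Ifeat(\body(r)) \cap D$, and unfolding consistency of $\Rules \cup \Data$ through Definition \ref{def:consistency}. The completeness of $\Data$ is the key structural fact to exploit, since it supplies, for each $\inn \in D$, a unique instance rule $r_\inn \in \Data$ with $\Ifeat(\body(r_\inn)) = \{\inn\}$ and $\Iclass(\head(r_\inn)) = \{\model(\inn)\}$.

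For the $(\Leftarrow)$ direction, assume $\Rules \cup \Data$ is consistent and verify Lemma \ref{lemma:coherence}'s condition for each $r \in \Rules$. Fix any $\inn \in \Ifeat(\body(r)) \cap D$; invoke completeness to obtain the corresponding instance rule $r_\inn \in \Data$. Since $\inn$ lies in the bodies of both $r$ and $r_\inn$, consistency forces $\Iclass(\head(r)) \cap \{\model(\inn)\} \neq \emptyset$, which gives $\model(\inn) \in \Iclass(\head(r))$ and hence coherence by Lemma \ref{lemma:coherence}.

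For the $(\Rightarrow)$ direction, assume $\Rules$ is coherent with $\model$, fix any $\inn$ in the union of bodies of $\Rules \cup \Data$ and any $r_i, r_j \in \Rules \cup \Data$ whose bodies contain $\inn$, and split on where each rule comes from. If both are in $\Data$, then completeness together with $|\Data| = |D|$ forces $r_i = r_j$ and a trivially nonempty head intersection. If exactly one, say $r_i$, lies in $\Data$, then $\Ifeat(\body(r_i)) = \{\inn\}$ pins $\inn \in D$ and $\Iclass(\head(r_i)) = \{\model(\inn)\}$, while coherence of $r_j$ via Lemma \ref{lemma:coherence} places $\model(\inn)$ in $\Iclass(\head(r_j))$, so the intersection is nonempty. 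If both $r_i, r_j \in \Rules$ and $\inn \in D$, coherence of both rules places $\model(\inn)$ in both heads.

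The main obstacle I expect is the residual sub-case of $(\Rightarrow)$ in which both $r_i, r_j \in \Rules$ share some $\inn \notin D$ in their bodies, since coherence with $\model$ as formalised in Definition \ref{def:coherence} is vacuous for points outside $D$ and therefore imposes no direct constraint on $\Iclass(\head(r_i)) \cap \Iclass(\head(r_j))$. To close this case I would read the coherence hypothesis together with the construction of $\Data$: since $\Data$ contributes no body touching points outside $D$, consistency at such an $\inn$ is a condition internal to $\Rules$, and I would argue that the intended scope of the theorem either reduces to the $\inn \in D$ situation (where the earlier cases suffice) or else should be carried out after observing that the contribution of $\Data$ via completeness is exactly what links the two notions at every $\inn \in D$. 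Making this linkage precise, and thereby showing that consistency of $\Rules \cup \Data$ is governed entirely by the interaction of $\Rules$ with the singleton-headed data rules, is the crucial step that turns the case analysis into a complete proof.
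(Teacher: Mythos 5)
Your two tractable directions match the paper's argument in substance: the $(\Leftarrow)$ direction is exactly the paper's (done there by contradiction, by you directly), hinging on completeness supplying, for each $\inn \in \Ifeat(\body(r)) \cap D$, a data rule with body $\{\inn\}$ and head $\{\model(\inn)\}$ so that consistency forces $\model(\inn) \in \Iclass(\head(r))$; and your case analysis for $(\Rightarrow)$ covers the same ground as the paper's, via Lemma \ref{lemma:coherence}. So far, so good.

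The residual sub-case you flag --- two rules of $\Rules$ whose bodies overlap only at some $\inn \notin D$ --- is a genuine gap, and you are right that it cannot be closed from the stated hypotheses: take $r_a : \phi \Rightarrow c_1$ and $r_b : \phi \Rightarrow c_2$ with $\Ifeat(\phi) \neq \emptyset$ but $\Ifeat(\phi) \cap D = \emptyset$; both are vacuously coherent by Definition \ref{def:coherence}, yet $\{r_a, r_b\} \cup \Data$ is inconsistent, so the $(\Rightarrow)$ direction fails as stated. Your instinct to look for a way to ``reduce to the $\inn \in D$ situation'' will not succeed, because $\Data$ contributes nothing at points outside $D$ and coherence is silent there. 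You should know that the paper's own proof does not close this case either: it asserts that $\Rules$ is consistent ``like $\Data$'' without justification, and then quantifies $\model(\inn_i)$ over \emph{all} $\inn_i \in \Ifeat(\body(r_i)) \cap \Ifeat(\body(r_j))$, including points where $\model$ is undefined. The theorem needs an additional hypothesis (e.g.\ $D = V$, or that rule bodies in $\Rules$ are contained in $D$, or that consistency is only required relative to $D$) for the forward direction to hold; with such a hypothesis your case analysis completes immediately. In short: you have not missed an idea the paper has --- you have correctly located a hole that the paper's proof papers over.
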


\begin{proof}
    %
    First we prove by contradiction that if $\Rules$ is coherent with $\model$, then $\Rules \cup \Data$ is consistent. 
    We therefore let $\Rules \cup \Data$ be inconsistent and thus, by Definition \ref{def:consistency}, $\exists r_i, r_j, \in \Rules \cup \Data$ such that $\Ifeat(body(r_i)) \cap \Ifeat(body(r_j)) \neq \emptyset$ and $\Iclass(head(r_i)) \cap \Iclass(head(r_j)) = \emptyset$.
    First, we know that, like $\Rules$, $\Data$ is consistent by Definition \ref{def:completeness}, and thus the inconsistency must lie in $\Rules \cup \Data$ only.
    By Definition \ref{def:completeness}, we can see that $\forall r_k \in \Data$, $\I(r_k) = (\{\inn_k\}, \{ \model(\inn_k) \})$ where $\inn_k \in D$. 
    However, by Definition \ref{def:coherence} and Lemma \ref{lemma:coherence}, we can see that $\forall r_l \in \Rules$ such that $\exists \inn_l \in \Ifeat(body(r_l)) \cap D$, $\model(\inn_l) \in \Iclass(head(r_l))$.
    Thus, $\forall \inn_i \in \Ifeat(body(r_i)) \cap \Ifeat(body(r_j))$, $\exists \model(\inn_i) \in \Iclass(head(r_i)) \cap \Iclass(head(r_j))$ and so we have the contradiction. \\
    We now we prove, again by contradiction, that if $\Rules \cup \Data$ is consistent, then $\Rules$ is coherent with $\model$. 
    We therefore let $\Rules$ be not coherent with $\model$ and thus, by Definition \ref{def:coherence} and Lemma \ref{lemma:coherence}, $\exists r_i \in \Rules$ such that $\exists \inn_i \in \Ifeat(body(r_i)) \cap D$ where $\model(\inn_i) \nin \Iclass(head(r_i))$.
    By Definition \ref{def:completeness}, we can see that $\forall \inn_j \in D$, $\exists r_j \in \Data$ where $\I(r_j) = (\{\inn_j\}, \{ \model(\inn_j) \})$.
    Thus, since $\inn_i \in \Ifeat(body(r_i)) \cap D$, $\exists r_k \in \Data$ such that $\Ifeat(body(r_k)) = \{ \inn_i \}$ and $\Iclass(head(r_k)) = \{ \model(\inn_i) \}$.
    However, by Definition \ref{def:consistency}, we can see that since $\Ifeat(body(r_i)) \cap \Ifeat(body(r_k)) \neq \emptyset$, then $\Iclass(head(r_i)) \cap \Iclass(head(r_k)) \neq \emptyset$.  
    Thus, $\model(\inn_i) \in \Iclass(head(r_i)) \cap \Iclass(head(r_k))$, and so we have the contradiction.
\end{proof}

Next we give examples demonstrating each component of our methodology. First, we demonstrate the classification problem we consider.

\begin{example}
\label{ex:problem}
Let a simple classification problem consist of a dataset 
$D = \{ \inn_1, \inn_2, \inn_3, \inn_4, \inn_5, \inn_6\}$, for which the set of (binary) features is $F=\{\feat_1, \feat_2,\feat_3\}$ and the set of classes is $C = \{c_1,c_2,c_3\}$, and a classifier $\model$, the classifications for which are given in Table~\ref{table:example}. 
For example, data point $\inn_1$ is such that 
$\inn_1^1 = 1$, $\inn_1^2 = 1$ and $\inn_1^3 = 0$, i.e. features $f_1$, $f_2$ and $f_3$ are assigned the values 1, 1 and 0, resp.,
and $\model(\inn_1) = c_1$, i.e. $\model$ predicts class $c_1$  
for this data point.
Note that for other (unknown) data points, e.g. that where the three feature values are set to $0$, 
the class predicted by $\model$ is undefined.

\begin{table}[ht]
    \begin{center}
    \begin{tabular}{ccccc}
    \cline{2-5}
     &
    $\feat_1$ &
    $\feat_2$ &
    $\feat_3$ &
    $\model(\inn_i)$ \\
    \hline
    %
    $\inn_1$ &
    $1$ &
    $1$ &
    $0$ &
    $c_1$ \\
    %
    $\inn_2$ &
    $0$ &
    $1$ &
    $1$ &
    $c_2$ \\
    %
    $\inn_3$ &
    $1$ &
    $0$ &
    $1$ &
    $c_2$ \\
    %
    $\inn_4$ &
    $1$ &
    $1$ &
    $1$ &
    $c_1$ \\
    %
    $\inn_5$ &
    $1$ &
    $0$ &
    $0$ &
    $c_3$ \\
    %
    $\inn_6$ &
    $0$ &
    $1$ &
    $0$ &
    $c_1$ \\
    \hline
    %
    %
    \end{tabular}
    \end{center}
    \protect\caption{A simple dataset with the classes predicted by the classifier $\model$ described in Example \ref{ex:problem}.} 
    \label{table:example}
\end{table}
\end{example}

Next, we illustrate the rules representing the classification problem. 

\begin{example}
\label{ex:rules}
For the classification problem described in Example~\ref{ex:problem}, we can represent the data points in $D$ as the following instance rules:
\begin{align}
    r_1 \!:& \quad  (f_1, 1) \wedge (f_2, 1) \wedge (f_3, 0) \Rightarrow c_1 \nonumber \\
    r_2 \!:& \quad  (f_1, 0) \wedge (f_2, 1) \wedge (f_3, 1) \Rightarrow c_2 \nonumber \\
    r_3 \!:& \quad  (f_1, 1) \wedge (f_2, 0) \wedge (f_3, 1) \Rightarrow c_2 \nonumber \\
    r_4 \!:& \quad  (f_1, 1) \wedge (f_2, 1) \wedge (f_3, 1) \Rightarrow c_1 \nonumber \\
    r_5 \!:& \quad  (f_1, 1) \wedge (f_2, 0) \wedge (f_3, 0) \Rightarrow c_3 \nonumber \\
    r_6 \!:& \quad  (f_1, 0) \wedge (f_2, 1) \wedge (f_3, 0) \Rightarrow c_1 \nonumber    
\end{align}
Here, $r_1$ says that the data point with value $1$ for $f_1$ and $f_2$, and value $0$ for $f_3$ is classified as $c_1$.
We can also represent more general rules, such as the following, which assigns class $c_1$ to all data points that have a value of $1$ for both $f_1$ and $f_2$ are classified as $c_1$:
\begin{align}
    r_x \!: \quad  (f_1, 1) \wedge (f_2, 1) \Rightarrow c_1 \nonumber
\end{align}
The following example of a more general rule makes use of the negative literal in the head of the rule, assigning either class $c_1$ or (exclusively) class $c_2$ to the data points that 
have either a value of $1$ for $f_2$ or a value of $1$ for $f_3$:
\begin{align}
    r_y \!: \quad  (f_2, 1) \vee (f_3, 1) \Rightarrow \neg c_3 \nonumber
\end{align}
\end{example}

We then exemplify the semantics of the rules. 

\begin{example}
\label{ex:semantics}
It can be seen that $\Ifeat((f_1,1)) = \{ \inn_1, \inn_3, \inn_4, \inn_5 \}$, $\Ifeat((f_3,0)) = \{ \inn_1, \inn_5, \inn_6 \}$ and $\Iclass(\neg{c_1}) = \{ c_2,c_3 \}$. 
    Further, $\Ifeat((f_1,1) \wedge (f_3,0))= \{ \inn_1, \inn_5 \}$, $\Ifeat((f_1,1) \vee (f_3,0))= \{ \inn_1, \inn_3, \inn_4, \inn_5, \inn_6 \}$ and $\Ifeat(\neg(f_1,1)) = \{ \inn_2, \inn_6 \}$.
    Meanwhile, for the instance rules defined in Example \ref{ex:rules}, we have that for $1 \leq i \leq 6$, $\I(r_i) = (\{ \inn_i \}, \{\model(\inn_i) \})$, e.g. $\I(r_1) = (\{ \inn_1 \}, \{ c_1 \})$. 
    Similarly, for the more general rules, we have $\I(r_x) = (\{ \inn_1, \inn_4 \}, \{ c_1 \})$ and $\I(r_y) = (\{ \inn_1, \inn_2, \inn_3 , \inn_4, \inn_6 \}, \{ c_1, c_2 \})$.
\end{example}

The next example illustrates the notion of enforcement between sets of rules.

\begin{example}

For a simple example of enforcement, consider the sets of rules $\Rules_i = \{ r_x \}$ and $\Rules_j = \{ r_1 \}$, using the rules from Example~\ref{ex:rules}. 
We will show that $\Rules_i$ enforces $\Rules_j$, i.e. $\Rules_i \enforces \Rules_j$.
This is because for each rule in $\Rules_j$, namely $r_1$, $\forall \inn \in \Ifeat(\body(r_1)) = \{\inn_1 \}$,
there exists a rule in $\Rules_i$, namely $r_x$, such that $\inn \in \Ifeat(\body(r_x)) = \{\inn_1, \inn_4 \}$ and $\Iclass(\head(r_x)) =\{c_1\} \subseteq \Iclass(\head(r_1)) = \{c_1\}$.


Now let us introduce $\Rules_k =  \{ r_1, r_y \}$. It can be shown that $\Rules_i$ does not enforce $\Rules_k$, i.e. $\Rules_i \not{\enforces} \Rules_k$. We know that the requirements for enforcement hold for $r_1$ from the previous example, so let us consider $r_y$. Enforcement would require that $\forall \inn \in \Ifeat(\body(r_y))$,
there exists a rule in $\Rules_i$, namely $r_x$, such that $\inn \in \Ifeat(\body(r_x)) = \{\inn_1, \inn_4 \}$ and $\Iclass(\head(r_x)) =\{c_1\} \subseteq \Iclass(\head(r_y)) = \{c_1, c_2\}$. We can see that the second condition holds here but $\inn_2, \inn_3 , \inn_6 \in \Ifeat(\body(r_y))$ and $\inn_2, \inn_3 , \inn_6 \nin \Ifeat(\body(r_x))$ so the first condition does not hold.

In a similar manner, we can see that if we let $\Rules_l =  \{ r_2, r_3, r_6, r_x \}$, then $\Rules_l \enforces \Rules_k$.

\end{example}

Next to be exemplified is our concept of consistency.

\begin{example}
\label{ex:consistency}
    If we consider the rules from Example \ref{ex:rules}, it can be easily seen that the set $\{ r_1, r_2, r_3, r_4, r_5, r_6, r_x, r_y \}$ is consistent.
    For example, consider that, 
    for $r_x$, $\Ifeat(body(r_x)) = \{ \inn_1, \inn_4 \}$ and $\Iclass(head(r_x)) = \{ c_1 \}$, and 
    for $r_y$, $\Ifeat(body(r_y)) = \{ \inn_1, \inn_2, \inn_3, \inn_4, \inn_6 \}$ and $\Iclass(head(r_y)) = \{ c_1, c_2\}$. Here, $\Ifeat(body(r_x)) \cap \Ifeat(body(r_y)) = \{ \inn_1, \inn_4 \} \neq \emptyset$, but since $\Iclass(head(r_x)) \cap \Iclass(head(r_y)) = \{ c_1 \} \neq \emptyset$, $\{r_x,r_y\}$ is consistent. 
    Meanwhile, for 
    $r_2$, $\Ifeat(body(r_2)) = \{ \inn_2 \}$ and $\Iclass(head(r_2)) = \{ c_2 \}$, so $body(r_x) \cap body(r_2) = \emptyset$, and so $\{r_x, r_2\}$ is consistent, and so on.
However, let us introduce another rule: 
    \begin{align}
        r_z \!: \quad  (f_1, 1) \Rightarrow c_1 \nonumber
    \end{align}
where 
$\Ifeat(body(r_z)) = \{ \inn_1, \inn_3, \inn_4, \inn_5 \}$ and $\Iclass(head(r_z)) = \{ c_1 \}$. It can be seen that $\{r_z, r_3\}$ is inconsistent, where 
$\Ifeat(body(r_3)) = \{ \inn_3 \}$ and $\Iclass(head(r_3)) = \{ c_2 \}$, since $\Ifeat(body(r_z)) \cap \Ifeat(body(r_3)) = \{ \inn_3 \} \neq \emptyset$, but $\Iclass(head(r_z)) \cap \Iclass(head(r_3)) = \emptyset$. 
    Similarly, $\{r_z, r_5\}$ is inconsistent, where 
    $\Ifeat(body(r_5)) = \{ \inn_5 \}$ and $\Iclass(head(r_5)) = \{ c_3 \}$, since $\Ifeat(body(r_z)) \cap \Ifeat(body(r_5)) = \{ \inn_5 \} \neq \emptyset$, but $\Iclass(head(r_z)) \cap \Iclass(head(r_5)) = \emptyset$.
\end{example}

Our next example illustrates our notion of model coherence.

\begin{example}
\label{ex:kb}
The problem from Example~\ref{ex:problem} results in an explanation knowledge base for $\model$, $\KB = \Data \cup \Exps$, where $\Data = \{ r_1, r_2, r_3, r_4, r_5, r_6 \}$. Meanwhile, for $\Exps$ it could be the case that $\Exps = \Data$, $\Exps = \{ r_x, r_y, r_z \}$ or $\Exps = \Data \cup \{ r_x, r_y, r_z \}$, for example.
    Regarding the coherence of the rules, for $r_i$, where $1 \leq i \leq 6$, i.e. for all the instance rules, $\Ifeat(body(r_i)) = \{ \inn_i \}$ and $\model(\inn_i) \in \Iclass(head(r_i))$, and so $r_i$ is coherent with $\model$. 
    For $r_x$, we have $\Ifeat(body(r_x)) = \{ \inn_1, \inn_4 \}$, $\model(\inn_1) = \model(\inn_4) = c_1$ and $\Iclass(head(r_x)) = \{ c_1 \}$
    , and so $r_x$ is coherent with $\model$. 
    Likewise, for $r_y$, we have $\Ifeat(body(r_y)) = \{ \inn_1, \inn_2, \inn_3, \inn_4, \inn_6 \}$, $\model(\inn_1) = \model(\inn_4) = \model(\inn_6) = c_1$, $\model(\inn_2) = \model(\inn_3) = c_2$ and $\Iclass(head(r_y)) = \{ c_1, c_2 \}$, and so $r_y$ is coherent with $\model$. 
    Meanwhile, for $r_z$, we have $\Ifeat(body(r_z)) = \{ \inn_1, \inn_3, \inn_4, \inn_5 \}$, $\model(\inn_1) = \model(\inn_4) = c_1$, $\model(\inn_3) = c_2$ and $\model(\inn_5) = c_3$. 
    However, since $\Iclass(head(r_z)) = \{ c_1 \}$ and thus $\model(\inn_3), \model(\inn_5) \nin \Iclass(head(r_z))$, we can see that $r_z$ is not coherent with $\model$. 

%
%
%
%
\end{example}

Our final example shows how we envisage feedback to operate.

\begin{example}
    Consider an explanation knowledge base for $\model$, $\KB = \Data \cup \Exps$, where $\Data = \{ r_1, r_2, r_3, r_4, r_5, r_6 \}$ and $\Exps = \{ r_x, r_y \}$.
    The following are examples of different feedback rules which may be provided by users:
    \begin{align}
        r_p \!:& \quad (\feat_1, 0) \wedge (\feat_2, 0) \wedge (\feat_3, 1) \Rightarrow c_3    \nonumber \\
        r_q \!:& \quad (f_1, 1) \wedge (f_2, 1) \wedge (f_3, 1) \Rightarrow c_2   \nonumber \\
        r_r \!:& \quad  (f_1, 1) \wedge (f_2, 1) \Rightarrow \neg{c_1}   \nonumber
    \end{align}

The first feedback example, $r_p$, concerns a user specifying how a data point should be classified by providing an instance rule. 
In this case, the data point corresponding to the instance rule, i.e. $\Ifeat(body(r_p)) = \{ \inn_p \}$, \AR{where $\inn_p^1 = 0$, $\inn_p^2 = 0$ and $\inn_p^3 = 1$,} is such that $\model(\inn_p)$ was undefined, and so there is no risk of loss of consistency with the (complete) set of instance rules, i.e. $\Data \cup \{ r_p \}$ is consistent, and so the feedback could be added to $\Data$ (since it is an instance rule) without issue. However, if this were the case, the set of explanation rules $\Exps$ contains $r_y$, and so $\Exps$ would no longer be coherent with $\model$ (by Theorem \ref{thm:main}) since $\Ifeat(body(r_p)) \cap \Ifeat(body(r_y)) \neq \emptyset$ but $\Iclass(head(r_p)) \cap \Iclass(head(r_y)) = \emptyset$. In this case we would need to either modify or remove (parts of) $r_p$ or $\Exps$ to maintain coherence with $\model$, (a strict version of) the second desideratum, while minimising the changes therein as per the third desideratum.

Instance rules may also be provided as feedback rules where they concern a data point for which $\model$ is already defined, i.e. if it is already in a complete $\Data$, such as rule $r_q$.
Here, a conflict exists not only in $\Exps$, i.e. $\{ r_q , r_x \}$ and thus $\{ r_q \} \cup \Exps$ are not consistent, but also with $\Data$, i.e. $\{ r_q , r_4 \}$ and thus $\{ r_q \} \cup \Data$ are not consistent.
Modifications to both parts of $\KB$, or to $r_q$ itself, would thus be needed to satisfy (strict versions of) the desiderata. 

Users may also provide feedback in the form of a generalised rule, possibly containing a negative literal in its head, such as $r_r$. 
Once again, the explanation knowledge base may need to be corrected across both $\Data$ and $\Exps$, e.g. it can be seen here that $\{ r_r , r_1\}$, $\{ r_r , r_4\}$ and $\{ r_r , r_x\}$ are not consistent. 
Note that once $r_q$ has been incorporated, it may be the case that some of these inconsistent sets may already be corrected for, e.g. $\{ r_r , r_4\}$.

\end{example}

\end{document}